\newcommand{\R}{\mathbb{R}}
\newcommand{\mc}[1]{\mathcal{#1}}
\newcommand{\im}{\operatorname{Im}}
\renewcommand{\P}{\mathbb{P}}
\newcommand{\E}{\mathbb{E}}
\newcommand{\KL}{\operatorname{KL}}
\renewcommand{\H}{\mathbb{H}}
\newcommand{\I}{\mathbb{I}}
\newtheorem{definition}{Definition}
\newtheorem{theorem}{Theorem}
\newtheorem{lemma}{Lemma}
\newtheorem{assumption}{Assumption}
\begin{document}
%
\title{Differentially Private Synthetic Data Generation via Lipschitz-Regularised Variational Autoencoders}
%
%
%

\author{\IEEEauthorblockN{Benedikt Groß\IEEEauthorrefmark{1}, Gerhard Wunder\IEEEauthorrefmark{1}}

\IEEEauthorblockA{\IEEEauthorrefmark{1}Department of Computer Science, 
Freie Universit\"at Berlin, Germany \\ \{benedikt.gross, g.wunder\}@fu-berlin.de
    }
}

\maketitle
\thispagestyle{empty}

\begin{abstract}

Synthetic data has been hailed as the silver bullet for privacy preserving data analysis. If a record is not real, then how could it violate a person's privacy? In addition, deep-learning based generative models are employed successfully to approximate complex high-dimensional distributions from data and draw realistic samples from this learned distribution. It is often overlooked though that generative models are prone to memorising many details of individual training records and often generate synthetic data that too closely resembles the underlying sensitive training data, hence violating strong privacy regulations as, e.g., encountered in health care. Differential privacy is the well-known state-of-the-art framework for guaranteeing protection of sensitive individuals' data, allowing aggregate statistics and even machine learning models to be released publicly without compromising privacy. The training mechanisms however often add too much noise during the training process, and thus severely compromise the utility of these private models. Even worse, the tight privacy budgets do not allow for many training epochs so that model quality cannot be properly controlled in practice. In this paper we explore an alternative approach for privately generating data that makes direct use of the inherent stochasticity in generative models, e.g., variational autoencoders. The main idea is to appropriately constrain the continuity modulus of the deep models \emph{instead} of adding another noise mechanism on top. For this approach, we derive mathematically rigorous privacy guarantees and illustrate its effectiveness with practical experiments.
\end{abstract}

\begin{IEEEkeywords}
Variational autoencoder, differential privacy, Lipschitz regularisation
\end{IEEEkeywords}

%
\IEEEpeerreviewmaketitle

\section{Introduction}
\label{sec:introduction}

Differential privacy (DP) was introduced in \cite{dwork2006differential} to provide a mathematical definition of privacy that allows people to participate in the collection of sensitive data, without having to worry about their personal secrets being revealed by querying the collected information. While initial research focused mainly on protecting individual records against privacy leakage through queries on aggregate quantities of a database, more recent work is centered around differentially private training of neural networks. Problematic to the applications of differential privacy is that the actual computations involving sensitive data need to be done by the data holder. If these computations exceed simple database queries, whose answers can be secured by applying e.g. the Laplace mechanism, this requires the data holder to provide compute capabilities as well as the knowledge to assess the training routine's correctness with respect to privacy to prevent data leakage. Hence, enabling third parties to train advanced statistical models on private data calls for a new approach. A promising direction is to create synthetic data in a differentially private manner that can then be published without restricting its use in any way due to the immunity of differentially private mechanisms to post-processing. In \cite{stadler2021synthetic} it was demonstrated that creating synthetic data alone was not sufficient to protect privacy for all data records. Especially outliers in a dataset proved to be sensitive to membership inference (MIA) and feature inference attacks. While several differentially private generative models exist in the literature (e.g. \cite{ho2021dp, jordon2018pate, torkzadehmahani2019dp, long2021g}), most of these models rely on using either PATE \cite{DBLP:conf/iclr/PapernotAEGT17} or differentially private variants of stochastic gradient descend (DP-SGD) \cite{abadi2016deep} to train a model for the generation of DP synthetic data. Since DP-SGD relies on the addition of properly scaled noise to the training process, it is expected that imposing strict privacy guarantees severely hampers the utility of the generated data. PATE on the other hand needs a lot of training data and additional public data to train the student model on. Moreover, the training cost is high since many teacher models have to be trained. In this work we propose a variational autoencoder (VAE)\cite{kingma2013auto} with a constraint on the Lipschitz constant of its decoder as a privacy preserving generative model. Relying on the inherent stochasticity of the encoding process, the proposed method produces differentially private synthetic data without the drawbacks of training the VAE model with DP-SGD, which reduces sample quality through adding noise to the gradient updates and limits the number of epochs for which the model can be trained, since meaningful privacy budgets are quickly exhausted. 
Its privacy preserving properties are quantified in a theoretical result that builds upon an information theoretic view on the evidence lower bound (ELBO) objective \cite{hoffman2016elbo} and results on differentially private posterior sampling \cite{wang2015privacy}.
The proposed Lipschitz-constrained VAE is evaluated numerically in terms of privacy and utility of the generated data against a vanilla VAE and a VAE trained with DP-SGD, and is shown to offer a better privacy-utility trade-off.

The paper is organised as follows: First a brief overview is given on differential privacy and in particular differential privacy for synthetic data generation in Section \ref{sec:differential_privacy}. Then the proposed Lipschitz-constrained VAE (LVAE) is introduced in Section \ref{sec:vae} and its privacy preserving qualities are investigated theoretically in Section \ref{sec:privacy_analysis}. Numerical experiments are conducted in which the utility and privacy of the proposed model is benchmarked in Section \ref{sec:experiments}. The paper concludes by discussing results and open questions for future research in Section \ref{sec:conclusion}. 

\section{Differential Privacy}
\label{sec:differential_privacy}

We begin by briefly introducing the standard notions of differential privacy and then discuss the hypothesis test viewpoint of DP, which renders the obtained privacy guarantees interpretable. The section ends with a definition of DP for synthetic data generation and the membership inference attack that is used for the empirical privacy evaluation in the experiment section.
\begin{definition}{($\epsilon$-DP)}
\label{def:eddp}
A randomized algorithm $\mc M$ is called $\epsilon\geq 0$ Differentially private, if for all 
neighboring databases $D_0, D_1$ (i.e. differing only in one record) and for all subsets $S\subset \im(\mc M)$ it holds
\begin{equation}
    \label{eq:eps_dp}
    \P\left[ \mc M(D_0)\in S\right]\leq \exp(\epsilon)\cdot\P\left[\mc M(D_1)\in S\right]
\end{equation}
\end{definition}

Since pure $\epsilon$-DP is difficult to achieve for composite privacy mechanisms as used for training machine learning models, the following relaxation is most often employed in the literature.
\begin{definition}{($(\epsilon, \delta)$-DP)}
A randomized algorithm $\mc M$ is called $\epsilon\geq 0, \delta\geq 0$ differentially private, if for all 
neighboring databases $D_0, D_1$ and for all subsets $S\subset \im(\mc M)$ it holds
\begin{equation}
    \label{eq:eps_delta_dp}
    \P\left[ \mc M(D_0)\in S\right]\leq \exp(\epsilon)\cdot\P\left[\mc M(D_1)\in S\right]+\delta
\end{equation}
\end{definition}
Differential privacy is usually achieved by adding properly scaled noise coming either from a Laplace or Gaussian distribution to the query outputs. For the composition of mechanisms various theorems exist to bound the compined privacy cost. Currently, the tightest composition bounds are achieved by using a so-called moments accountant \cite{wang2019subsampled} or the Gaussian DP framework introduced below.

\subsection{Differential Privacy as Hypothesis Test Problem}

In \cite{kairouz2015composition}, a characterization of DP as a hypothesis test problem is provided.
Given neighboring databases $D_0, D_1$, a randomized mechanism $\mc M$ and output $Y$, consider the following hypothesis testing problem: 
\begin{align*}
    H_0 \quad&:\quad Y \text{ came from database } D_0, \\
    H_1 \quad&:\quad Y \text{ came from database } D_1.
\end{align*}
For a rejection region $S$ the probability of false alarm (type I error), i.e. the null hypothesis is true but rejected, is defined as $P_{FA}(D_0, D_1, \mc M, S)=\P(\mc M(D_0)\in S)$, and the probability of missed detection (type II error), i.e. the null hypothesis is false but retained, is $P_{MD}(D_0, D_1, \mc M, S)=\P(\mc M(D_1)\in S)$.
\begin{theorem}{(Theorem 2.1 from \cite{kairouz2015composition})}
For any $\epsilon\geq 0$ and $\delta\in[0,1]$, a database mechanism $\mathcal{M}$ is $(\epsilon, \delta)$-DP if and only if the following conditions are satisfied 
for all pairs of neighboring databases $D_0, D_1$, and all rejection regions $S\subset \mc X$:
\begin{align*}
    P_{FA}(D_0, D_1, \mc M, S) + e^{\epsilon}P_{MD}(D_0, D_1, \mc M, S)&\geq 1-\delta, \\
    e^{\epsilon}P_{FA}(D_0, D_1, \mc M, S) + P_{MD}(D_0, D_1, \mc M, S)&\geq 1-\delta.
\end{align*}
\end{theorem}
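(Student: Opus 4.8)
The plan is to establish the claimed equivalence by a direct manipulation of the defining inequality \eqref{eq:eps_delta_dp}, exploiting two structural facts: the neighboring relation is symmetric, and the output probabilities attached to a rejection region $S$ and to its complement $\mc X\setminus S$ are linked by $\P(\mc M(D)\in S)=1-\P(\mc M(D)\in \mc X\setminus S)$. First I would record the translation dictionary: with $P_{FA}=\P(\mc M(D_0)\in S)$ the type-I error and $P_{MD}=\P(\mc M(D_1)\in \mc X\setminus S)$ the type-II error (the output landing in the acceptance region $\mc X\setminus S$ under $H_1$, i.e. $H_0$ retained), each inequality in the theorem is just \eqref{eq:eps_delta_dp} rewritten for a suitable region and a suitable ordering of the pair $(D_0,D_1)$.

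For the forward direction ($(\epsilon,\delta)$-DP implies the two inequalities) I would start from \eqref{eq:eps_delta_dp} applied to the ordered pair $(D_0,D_1)$ but evaluated at the complementary region $\mc X\setminus S$. Substituting $\P(\mc M(D_i)\in \mc X\setminus S)=1-\P(\mc M(D_i)\in S)$ and rearranging turns $\P(\mc M(D_0)\in \mc X\setminus S)\le e^{\epsilon}\P(\mc M(D_1)\in \mc X\setminus S)+\delta$ into $P_{FA}+e^{\epsilon}P_{MD}\ge 1-\delta$. Because \eqref{eq:eps_delta_dp} must hold for \emph{every} neighboring pair and neighboring is symmetric, it also holds with the roles of $D_0$ and $D_1$ exchanged; applying that swapped inequality directly to $S$ and performing the same substitution yields the second inequality $e^{\epsilon}P_{FA}+P_{MD}\ge 1-\delta$.

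The reverse direction (the two inequalities imply $(\epsilon,\delta)$-DP) runs the same algebra backwards. Assuming both inequalities hold for all rejection regions $S$, I would evaluate the first inequality at the complement $\mc X\setminus S$ and simplify to recover $\P(\mc M(D_0)\in S)\le e^{\epsilon}\P(\mc M(D_1)\in S)+\delta$, and evaluate the second inequality directly at $S$ to obtain the reverse bound $\P(\mc M(D_1)\in S)\le e^{\epsilon}\P(\mc M(D_0)\in S)+\delta$. Together these are precisely \eqref{eq:eps_delta_dp} for the ordered pair $(D_0,D_1)$ and for $(D_1,D_0)$, and since both quantifications range over all neighboring pairs and all measurable $S$, this is exactly $(\epsilon,\delta)$-DP.

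The step needing the most care is the bookkeeping of quantifiers together with the complementation trick: one must verify that the universal quantifier over regions is preserved under the substitution $S\mapsto \mc X\setminus S$ (an involution on measurable sets, so no generality is lost), and that the two conditions genuinely encode the two \emph{ordered} directions of the DP bound rather than a single one. The symmetry of the neighboring relation is the key observation that upgrades the one-sided Definition \eqref{eq:eps_delta_dp} to the symmetric pair of hypothesis-testing conditions; overlooking it would leave only one of the two stated inequalities. Beyond this conceptual point the argument is routine algebra.
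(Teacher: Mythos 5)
The paper does not prove this statement---it is quoted as Theorem~2.1 of Kairouz et al.\ and used as a black box---so there is no in-paper argument to compare against. Your proof is correct and is exactly the standard complementation argument: each of the two hypothesis-testing inequalities is the defining bound \eqref{eq:eps_delta_dp} applied either to $\mc X\setminus S$ for the ordered pair $(D_0,D_1)$ or to $S$ for the swapped pair $(D_1,D_0)$, and the equivalence follows because $S\mapsto\mc X\setminus S$ is an involution on measurable sets and the neighboring relation is symmetric. One point worth making explicit: your argument silently \emph{corrects} the paper's displayed definition of the type~II error. The paper writes $P_{MD}(D_0,D_1,\mc M,S)=\P(\mc M(D_1)\in S)$, whereas you use $P_{MD}=\P(\mc M(D_1)\in\mc X\setminus S)$. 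Your reading is the right one---it matches the verbal description (``the null hypothesis is false but retained'') and the original source, and with the paper's literal formula the theorem is false (take $S=\emptyset$: both error probabilities vanish and the inequalities would force $\delta\geq 1$). So the only caveat is that your proof establishes the theorem for the corrected definition of $P_{MD}$, not for the formula as printed.
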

The hardness of this hypothesis test can be characterized by the optimal trade-off between type I and type II errors. Let $P,Q$ be the probability distributions $\mc M(D_0)$, rep. $\mc M(D_1)$.
For a rejection rule $0\leq \phi\leq 1$, define the type I and type II error rates as
\begin{equation*}
    \alpha_\phi = \E_P[\phi], \quad \beta_\phi = 1-\E_Q[\phi].
\end{equation*}
\begin{definition}{(trade-off function)}
For any two probability distributions $P$ and $Q$ on the same space, the trade-off function $T(P,Q)\rightarrow [0,1]$ is defined as
\begin{equation*}
    T(P,Q)(\alpha) = \inf\{\beta_\phi : \alpha_\phi\leq \alpha\},
\end{equation*}
where the infimum is taken over all measurable rejection rules $\phi$.
\end{definition}

The trade-off function defines the boundary between achievable and unachievable regions of type I and type II errors of the hypothesis test. This leads to the definition of $f$ differential privacy. Define $f\geq g$, if $f(x)\geq g(x)$ for all $x\in[0,1]$.
\begin{definition}{($f$-DP)}
Let $f$ be a trade-off function. A mechanism $\mc M$ is said to be $f$-differentially private, if 
\begin{equation*}
    T\left(\mc M(D_0), \mc M(D_1)\right) \geq f
\end{equation*}
for all neighboring datasets $D_0, D_1$.
\end{definition}
The connection to $(\epsilon, \delta)$-DP is as follows:
For given $(\epsilon, \delta)$, define the trade-off function
\begin{equation*}
    f_{\epsilon, \delta}(\alpha) = \max\{0, 1-\delta-e^{\epsilon}\alpha, e^{-\epsilon}(1-\delta-\alpha)\}.
\end{equation*}

Furthermore, in the $f$-DP framework, privacy guarantees can be compared to the trade-off for distinguishing between normal distributions with unit variance and differing means. This notion of DP, called Gaussian-DP (GDP), introduced in \cite{dong2019gaussian}, not only gives tight composition bounds, but also enjoys a central limit-type theorem that states that the composition of differentially private mechanisms in the limit enjoys a GDP trade-off. 

\subsection{Differential Privacy for Generative Models}

A generative model with parameters $\theta$ learns a probability density $p_\theta(x)$ on data space $\mc X$, from which new data points can be sampled. The parameters $\theta$ are adjusted to increase the likelihood of training points $x\in D\subset \mc X$. The definition of $(\epsilon, \delta)$-DP thus gives a bound on how much the likelihood of a subset $S\subset \mc X$ can change by modifying one training point.

\begin{definition}{($(\epsilon, \delta)$-DP for generative models)}
\label{def:dp_gm}
A generative model $p_\theta$ is DP for $\epsilon\geq 0$, $\delta\geq 0$ if,
    for neighboring databases $D_0, D_1$ and corresponding trained models $p_{\theta(D_0)}, p_{\theta(D_1)}$ and any $S\subset \mc X$ it holds
    \begin{equation}
        \int_S p_{\theta(D_0)} dx \leq e^\epsilon \int_S p_{\theta(D_1)} dx + \delta.
    \end{equation}
\end{definition}

\subsection{Membership Inference Attacks}

Membership inference \cite{shokri2017membership, salem2018ml} is an attack on a machine learning model, where an adversary tries to infer whether a particular record  was part of the training set of the attacked model . Depending on the type of ML model and the information about that model that is available to the adversary, there exist different variants of the membership inference attack (MIA). 
The types of MIAs that can be utilized against generative models depend on the knowledge of the attacker. In a white-box setting the attacker can access the parameters of the generative model itself and use this knowledge to craft an attack. Here we focus on a block-box setting, i.e. the model parameters of the target model are not available.
In \cite{shokri2017membership} it is assumed that the attacker has access to public data that is drawn from the same distribution as the target model's training data, and the architecture and parameters of the target model are known. Several so-called shadow models are trained that mimic the behaviour of the target model. An attack model is then trained on the shadow models' outputs and used to classify the record under consideration as member or non-member of the target model's training data.
Later works were able to relax these assumptions \cite{salem2018ml, hilprecht2019monte}.The most common attacks against generative models/synthetic data in a black box setting are as follows:
\begin{enumerate}
    \item Classifier trained on shadow models \cite{shokri2017membership}:
    The MIA then proceeds with the following steps:
    \begin{enumerate}
        \item Draw $m$ datasets of size $n$ from the public data.
        \item Train $2m$ shadow models, one for each data set with and without inserting the target record $t$ into the set.
        \item Generate $p$ synthetic data sets from each shadow model. This results in a labelled data set of size $2mp$ of pairs $(x,y)$, where $x$ is a synthetic data set and $y=0$ if the target record was not in the set used to train the model that generated $x$, $y=1$, if the target record was present.
        \item Train a (binary) attack classifier on the labelled set of data sets.
        \item Use the attack classifier to predict, whether the target record was present in the private data that was used to generate the synthetic data set.
    \end{enumerate}
    The drawback of this method is that it is computationally very expensive. For each target record, several shadow models have to be trained and the resulting attack classifier is only usable for this specific record.
    \item GAN-based attack \cite{hayes2017logan}: \\
    Here, a GAN is trained on synthetic data generated by the target model. Its discriminator's confidence is used to predict whether a target record was part of the training of the target model. \\
    In contrast to the attack described above, only one model has to be trained, which can be used for all target records of interest.
    \item Monte Carlo attack \cite{hilprecht2019monte}: \\
    The simplest MIA on synthetic data does not need any shadow model at all. The classification of target records into members and non-members of the target model's training set is based on thresholding the distance of a target record to its nearest synthetic records in a suitable norm. Despite its apparent simplicity this attack proved surprisingly effective, especially in cases where the generative model overfitted to the training set.
\end{enumerate}

Clearly, for a $f$-DP model $\mc M$, the trade-off between type I and type II errors of the resulting attack classifier is lower bounded by the trade-off function $f$.
For the empirical privacy evaluation in this paper the Monte Carlo attack is employed.

\section{Differentially Private Generative Models}
\label{sec:vae}
Deep learning based generative models have become prevalent for the generation of synthetic data due to their capabilities of learning complex distributions from high dimensional data. The most prominent models in the literature are generative adversarial networks (GAN), variational autoencoders and variants thereof, and diffusion models. While GANs tend to produce higher quality samples, VAEs are more open to theoretical analysis since the data distribution is modeled explicitly, following the Bayesian paradigm. In this work, a VAE architecture is modified to satisfy DP as in Definition \ref{def:dp_gm}.

\subsection{Variational Autoencoders}
In VAEs, data is generated by drawing a latent variable $z\in \R^d$ from a prior distribution $p(z)$, which is a simple distribution, e.g. a Gaussian with zero mean and identity covariance matrix. A sample is then drawn according to a parameterized likelihood $x\sim p_\theta(x|z)$. Hence, the data distribution is approximated as 
\begin{equation*}
    p_\theta(x) = \int p_\theta(x|z)p(z)dz.
\end{equation*}
The parameters of the likelihood are fit by maximizing this marginal likelihood $p_\theta(x)$ of the data points $\{x_1,\ldots, x_N\}\subset \mc X$. This is achieved by maximizing the so-called evidence lower bound (ELBO), i.e.
\begin{align}
    &\log p_\theta(x) = \log q_\phi(z|x)\frac{p_\theta(x,z)}{q_\phi(z|x)}dz \nonumber \\
    \label{eq:elbo}
    \geq &\E_{q_\phi(z|x)}\log \frac{p_\theta(x,z)}{q_\phi(z|x)} = \mc L(x; \phi,\theta),
\end{align}
where $q_\phi(z|x)$ is a variational density, usually a parameterized Gaussian, that approximates the true posterior distribution $p_\theta(z|x)=p_\theta(x|z)p(z)/p_\theta(x)$, which is intractable due to the integration in the denominator. In practice, the likelihood $p_\theta(x|z)$ is parameterized by a decoder network that outputs the parameters of the distribution of $x$, and similarly the approximate posterior $q_\phi(z|x)$ by an encoder network that, given a data point $x$, outputs the parameters for a Gaussian distribution of $z|x$ in latent space.

Several ways of interpreting the ELBO objective have been explored in the literature (e.g. \cite{hoffman2016elbo} and references therein), emphasizing different aspects of this variational framework, and thus gave rise to various improvement strategies.
The formulation
\begin{equation*}
    \mc L(x;\phi,\theta) = p_\theta(x) -\KL\left(q_\phi(z|x)||p_\theta(z|x)\right)
\end{equation*}
shows that the ELBO can be tightened by either improving the prior $p(z)$ or the likelihood $p_\theta(x|z)$, thus improving the log evidence of the model, or by reducing posterior mismatch, as expressed by the KL term. The latter can be achieved by allowing for more expressive approximate posteriors than mere Gaussians with diagonal covariance matrix. Arguably the most common formulation of the ELBO, and usually the basis for implementation, is as average reconstruction error minus KL regulariser,
\begin{align*}
    \E_{x\sim p(x)}\left[\mc L(x;\phi,\theta]\right] = \frac1N\sum\limits_{i=1}^N[&\E_{z_i\sim q_\phi(z|x_i)}\left[\log p_\theta(x_i|z_i)\right] \\
    &- \KL(q_\phi(z_i|x_i)||p(z))].
\end{align*}
Achieving disentangled latent representations lead to the introduction of the $\beta$-VAE by reweighting the two parts of the objective \cite{higgins2017beta}. A large body of literature focusses on learnable priors (e.g. \cite{tomczak2018vae}), thus facilitating to match approximate posterior and prior, and improving sample quality by preventing high density prior regions with low density posterior. 

In \cite{hoffman2016elbo}, another interpretation of the ELBO objective is derived, which further decomposes the prior matching term to shed light on the possible causes of ELBO sub-optimality.

To this end, the following 'index code' terms are defined, which arise from uniformly sampling from the training data points $x_1,\ldots, x_N \in D$. 
\begin{align}
    q(i,z) &= q(z|i)q(i), \quad q(z|i)=q_\phi(z|x_i), \quad q(i)=\frac1N, \\
    p(i,z) &= p(z|i)p(i), \quad p(z|i)=p(z), \quad p(i) = \frac1N.
\end{align}
Furthermore, define the random variables $\mathbf Z\sim p(z)$ and $\mathbf N\sim \frac1N$ and the average encoding distribution 
\begin{equation}
  q_\phi^{avg}(z) = \frac1N \sum\limits_{i=1}^N q_\phi(z|x_i).  
\end{equation}
With these terms, the KL-term is decomposed as follows.

\begin{lemma}{(Decomposition from \cite{hoffman2016elbo})}
\label{lemma:idx_code}
    Let $x_1, \ldots x_N\sim p(x)$. Then 
    \begin{equation}
    \begin{split}
        \frac1N \sum\limits_{i=1}^N \KL(q_\phi(z|x_i)||p(z)) = 
        &\KL(q_\phi^{avg}(z)||p(z)) \\
        + &\log N - \H(\mathbf N|\mathbf Z).
        \end{split}
    \end{equation}
\end{lemma}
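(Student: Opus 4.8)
The plan is to use the standard \emph{add-and-subtract the aggregated posterior} manipulation, which cleanly splits the averaged KL term into a mutual-information contribution and a single marginal KL. First I would rewrite each summand by inserting $q_\phi^{avg}(z)$ inside the logarithm: since
\[
\log\frac{q_\phi(z|x_i)}{p(z)} = \log\frac{q_\phi(z|x_i)}{q_\phi^{avg}(z)} + \log\frac{q_\phi^{avg}(z)}{p(z)},
\]
the left-hand side decomposes as
\[
\frac1N\sum_{i=1}^N \int q_\phi(z|x_i)\log\frac{q_\phi(z|x_i)}{q_\phi^{avg}(z)}\,dz
\;+\; \frac1N\sum_{i=1}^N \int q_\phi(z|x_i)\log\frac{q_\phi^{avg}(z)}{p(z)}\,dz.
\]

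Next I would treat the two terms separately. For the second term, pulling the finite sum inside the integral and using $\frac1N\sum_i q_\phi(z|x_i) = q_\phi^{avg}(z)$ collapses it immediately to $\KL(q_\phi^{avg}(z)\|p(z))$, which is exactly the first term on the right-hand side. For the first term I would recognise a mutual information: under the index-code joint $q(i,z)=q(i)q(z|i)$ with uniform $q(i)=1/N$ and $q(z|i)=q_\phi(z|x_i)$, the $z$-marginal is precisely $q_\phi^{avg}(z)$, so
\[
\frac1N\sum_{i=1}^N \int q_\phi(z|x_i)\log\frac{q_\phi(z|x_i)}{q_\phi^{avg}(z)}\,dz
= \sum_{i} q(i)\,\KL\!\big(q(z|i)\,\|\,q_\phi^{avg}(z)\big)
= \MI(\mathbf N;\mathbf Z).
\]

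Finally I would convert the mutual information into the claimed entropic form. Using the identity $\MI(\mathbf N;\mathbf Z)=\H(\mathbf N)-\H(\mathbf N|\mathbf Z)$ together with the fact that $\mathbf N$ is uniform on $N$ outcomes, whence $\H(\mathbf N)=\log N$, gives $\MI(\mathbf N;\mathbf Z)=\log N-\H(\mathbf N|\mathbf Z)$. Adding this to the marginal KL obtained in the previous step yields the stated decomposition.

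I expect the only real subtlety to be bookkeeping about which distribution governs each expectation: both $\MI(\mathbf N;\mathbf Z)$ and the conditional entropy $\H(\mathbf N|\mathbf Z)$ must be taken under the joint $q(i,z)$, whose $z$-marginal is $q_\phi^{avg}(z)$ rather than the prior $p(z)$. Once that joint is pinned down, every step is a routine rearrangement of integrals, and interchanging the finite sum with the integral needs no justification beyond linearity.
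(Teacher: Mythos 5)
Your proof is correct and is the standard ``ELBO surgery'' derivation from the cited reference; the paper itself states this lemma without proof, and your add-and-subtract of $q_\phi^{avg}(z)$ inside the logarithm, followed by $\MI(\mathbf N;\mathbf Z)=\H(\mathbf N)-\H(\mathbf N|\mathbf Z)=\log N-\H(\mathbf N|\mathbf Z)$, is exactly the intended argument. Your closing remark is also the one point worth making explicit: the identity requires $\H(\mathbf N|\mathbf Z)$ to be taken under the joint $q(i,z)$ whose $z$-marginal is $q_\phi^{avg}(z)$, which quietly corrects the paper's declaration that $\mathbf Z\sim p(z)$.
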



Empirically it is confirmed that a VAE in standard operation  produces encodings with very small variance in order to facilitate reconstruction. The average encoding distribution can closely match the prior nevertheless. This precise encoding leads to the conditional entropy term being low, which in terms of privacy implies that the encoder is not private, since latent representations can be mapped back unambiguously to their respective pre-images. An encoder operating under a privacy guarantee on the other hand prevents this one-to-one mapping of latent representations and corresponding inputs, thus the entropy term is close to its maximum $\log N$. Under the premise that the average encoding distribution is still close to the prior, this implies that also the encodings of single data points match the prior more closely, i.e. $\KL(q_\phi(z|x_i)||p(z))$ is small for all $i=1,\ldots N$.

\subsection{Lipschitz-Regularised VAE}
In this work, the VAE architecture is modified to achieve DP. Therefore, the decoder is constrained to learning Lipschitz continuous mappings with pre-specified Lipschitz constant.
The Lipschitz constant of a function $f:X\rightarrow Y$ between metric spaces $X$ and $Y$ is defined as the smallest constant $L>0$ such that
\begin{equation*}
    \forall x,y\in X: \|f(x)-f(y)\|_Y \leq L \|x-y\|_X,
\end{equation*}
where $\|\cdot\|_X$ and $\|\cdot\|_Y$ denote the norms of the metric spaces $X$ and $Y$.
A Lipschitz continuous function thus can not map points that are close arbitrarily far apart. In particular, points that are mapped close together by the encoder of the VAE will have similar reconstructions. This in turn has implications for data points that deviate significantly from typical entries in the training data, i.e. outliers. These deviant training samples can either be mapped by the encoder into a high density region of the latent prior $p(z)$ (i.e. around 0), where, due to the Lipschitz constraint on the decoder they can not be reconstructed properly, or, in order to keep the reconstruction error small while retaining a smooth latent embedding, have to be mapped into a low density region far away from the origin in latent space. In both cases, the outliers are unlikely to be reproduced in the generation process, since latent space samples close to the latent embeddings of these outliers are either not properly reconstructed, or very unlikely to be sampled at all.

To make the analysis tractable we consider the case of a VAE with mean squared error reconstruction loss, i.e. the likelihood of the data is parameterized as a multivariate Gaussian with unit covariance matrix and learned mean $\mu_\theta$:
\begin{equation}
\label{eq:likelihood}
    p_\theta(x|z) = \frac{1}{\sqrt{(2\pi)^n}} \exp\left(-\frac12\|x-\mu_\theta(z)\|^2 \right).
\end{equation}

\begin{assumption}[Lipschitz continuous decoder]
    The decoder $\mu_\theta : \mc Z\rightarrow \mc X$ is $L>0$ Lipschitz continuous, i.e. for any weights $\theta$ it holds
    \begin{equation}
        \label{eq:lip_enc}
        \forall z_1, z_2 \in\mc Z: \|\mu_\theta(z_1)-\mu_\theta(z_2)\| \leq L \|z_1-z_2\|.
    \end{equation}
\end{assumption}

The prior over the latent space $p(z)$ is chosen as a $d$-dimensional standard normal distribution. Although this implies an unbounded space $\mc Z=\R^d$, any randomly drawn sample $z\sim p(z)$ falls within a ball of radius $R_z$ with high probability. We define $R_z(\delta)$ as the radius of this ball such that, for given $\delta>0$
\begin{equation}
    \label{eq:radius}
    \P\left(\|Z\|\leq R_z(\delta)\right)>1-\delta, \qquad Z\sim \mc N(\mathbf{0}, \mathbf{I}_d).
\end{equation}
We can assume that the sample space is contained in a ball around 0 with radius $R_x$, $\mc X\subset \{x\in\R^n : \|x\|\leq R_x\}$. $R_x$ can be controlled by rescaling the data.

For our analysis we will assume sufficient model power (especially in the encoder) such that any trained encoder/decoder pair satisfies a consistency property:
\begin{assumption}[Consistency]
    For subsets $B_X\subset\mc X$, $B_Z\subset \mc Z$ it holds:
    \begin{equation}
        \label{eq:consistency}
        \forall x\in B_X, \forall z\in B_Z: q_\phi(z|x)p(x) = p_\theta(x|z)p(z).
    \end{equation}
\end{assumption}

Our modified VAE setup can thus be summarised as the constrained optimisation problem
\begin{equation}
\begin{split}
    &\min\limits_{\phi,\theta} -\frac1N\sum\limits_{i=1}^N\mc L(x_i; \phi,\theta) \\ &\text{subject to $\mu_\theta$ is $L$-Lipschitz continuous.}
    \end{split}
\end{equation}

\section{Privacy Analysis of L-VAE}
\label{sec:privacy_analysis}
The privacy preserving properties of the Lipschitz constrained VAE are analyzed theoretically using tools from information theory, an information theoretic characterization of differential privacy \cite{cuff2016differential, wang2016relation} and results on privacy of posterior sampling from \cite{wang2015privacy}. 

\subsection{Bounded log-Likelihood}
\begin{lemma}[Bounded log-likelihood]
\label{lemma:bounded_log_likelihood}
For all $x_1, x_2\in\mc X$ and all $z\in\mc Z$ it holds
\begin{equation}
    |\log p_\theta(x_1|z)-\log p_\theta(x_2|z)|\leq C.
\end{equation}
\end{lemma}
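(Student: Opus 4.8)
The plan is to compute the log-likelihood difference directly from the Gaussian form in \eqref{eq:likelihood} and reduce the claim to a uniform bound on a difference of squared Euclidean distances. Writing $\log p_\theta(x|z) = -\tfrac{n}{2}\log(2\pi) - \tfrac12\|x-\mu_\theta(z)\|^2$, the normalising constant $-\tfrac{n}{2}\log(2\pi)$ is independent of $x$ and cancels, leaving
\begin{equation*}
    \log p_\theta(x_1|z) - \log p_\theta(x_2|z) = \tfrac12\left(\|x_2-\mu_\theta(z)\|^2 - \|x_1-\mu_\theta(z)\|^2\right).
\end{equation*}
Thus it suffices to bound $\bigl|\,\|x_1-\mu_\theta(z)\|^2 - \|x_2-\mu_\theta(z)\|^2\,\bigr|$ independently of $x_1,x_2$ and $z$.

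Next I would factor this difference of squares. Abbreviating $\mu=\mu_\theta(z)$, the identity $\|x_1-\mu\|^2 - \|x_2-\mu\|^2 = \langle x_1-x_2,\,x_1+x_2-2\mu\rangle$ together with Cauchy--Schwarz gives the product bound $\|x_1-x_2\|\cdot\|x_1+x_2-2\mu\|$. Each factor is then controlled by the boundedness of the data ball: since $\mc X\subset\{x:\|x\|\le R_x\}$ we have $\|x_1-x_2\|\le 2R_x$ and $\|x_1+x_2-2\mu\|\le \|x_1\|+\|x_2\|+2\|\mu\|$, so the whole expression is at most $2R_x\cdot(2R_x+2\|\mu\|)$ and the claim reduces to controlling $\|\mu\|$.

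The one genuinely substantive point is bounding $\|\mu_\theta(z)\|$ \emph{uniformly} over the unbounded latent space $\mc Z=\R^d$, which is exactly where the ``for all $z$'' quantifier would defeat a naive estimate. I would resolve this using the fact that the decoder maps into the data space, $\mu_\theta:\mc Z\to\mc X$, so $\mu_\theta(z)\in\mc X$ and hence $\|\mu_\theta(z)\|\le R_x$ for every $z$; this makes $\|x_1+x_2-2\mu\|\le 4R_x$ and yields the uniform constant $C=4R_x^2$, independent of both $z$ and the Lipschitz constant $L$. If instead one is unwilling to assume the decoder's range lies in $\mc X$, the fallback is to restrict to the high-probability latent ball of radius $R_z(\delta)$ from \eqref{eq:radius} and apply the Lipschitz assumption \eqref{eq:lip_enc} against the origin, giving $\|\mu_\theta(z)\|\le\|\mu_\theta(0)\|+L\,R_z(\delta)$ and a correspondingly larger constant $C$ that grows with $L\,R_z(\delta)$, at the price of the bound holding only on that ball. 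I expect the first route to be intended, since it keeps $C$ clean and honours the universal quantifier over $z$ in the statement.
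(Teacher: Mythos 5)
Your proof is correct as a proof of the literal statement, and it opens exactly as the paper's does: cancel the $x$-independent normaliser, factor the difference of squared distances (the paper uses $|a^2-b^2|=(a+b)|a-b|$ with the reverse triangle inequality where you use the inner-product form with Cauchy--Schwarz; these are interchangeable), and invoke $\mc X\subset\{\|x\|\le R_x\}$. The divergence is at the step you yourself flag as the substantive one, and here the paper takes your \emph{fallback}, not your primary route: it inserts intermediate points $\mu_\theta(z_1),\mu_\theta(z_2)$ and applies the Lipschitz assumption \eqref{eq:lip_enc} together with the high-probability latent ball of radius $R_Z(\delta)$ from \eqref{eq:radius} to get $\|\mu_\theta(z_i)-\mu_\theta(z)\|\le L\|z_i-z\|\le 2LR_Z(\delta)$, arriving at $C=R_x^2+2LR_Z(\delta)R_x$ in general and $C=4L^2R_Z^2(\delta)$ when $x_1,x_2$ lie in the decoder's image. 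This choice is not cosmetic: $C$ is promoted to the privacy parameter $\epsilon=2C$ in the Posterior lemma, and the paper's entire thesis is that shrinking the Lipschitz constant $L$ shrinks $\epsilon$. Your primary constant $C=4R_x^2$ is independent of $L$, so under that route the Lipschitz regularisation would contribute nothing to the privacy guarantee --- the lemma would hold identically for an unconstrained decoder. What your route buys in exchange is genuine uniformity over all $z\in\mc Z$ (honouring the quantifier in the statement, which the paper's own proof silently violates by restricting to $\|z\|,\|z_i\|\le R_Z(\delta)$) and independence from the extra, unstated assumptions the paper needs (e.g.\ $\|x_i-\mu_\theta(z_i)\|\le R_x$ and $\|x_1-x_2\|\le R_x$ rather than $2R_x$). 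A defensible synthesis is to present the Lipschitz-dependent bound as the operative one and your $4R_x^2$ as a worst-case cap, i.e.\ $C=\min\{4R_x^2,\;\cdot\}$; note also that $\|\mu_\theta(z)\|\le R_x$ is only as good as the claim that the decoder's range really lies in $\mc X$, which the type signature $\mu_\theta:\mc Z\to\mc X$ asserts but a linear output layer does not enforce.
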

\begin{proof}
\begin{align*}
    &|\log p_\theta(x_1|z)-\log p_\theta(x_2|z)| \\
    =&\left| \frac12\|x_1-\mu_\theta(z)\|^2 - \frac12\|x_2-\mu_\theta(z)\|^2 \right| \\
    = & \frac12 \left(\|x_1-\mu_\theta(z)\| + \|x_2-\mu_\theta(z)\|\right) \\
    &\cdot\left( \left|\|x_1-\mu_\theta(z)\| - \|x_2-\mu_\theta(z)\| \right| \right) \\
    \leq &\frac12 \left(\|x_1-\mu_\theta(z)\| + \|x_2-\mu_\theta(z)\|\right)\|x_1-x_2\| \\
    =& \frac12(\|x_1-\mu_\theta(z_1) + \mu_\theta(z_1)-\mu_\theta(z)\| \\ 
    &+ \|x_2-\mu_\theta(z_2) + \mu_\theta(z_2)-\mu_\theta(z)\| )\|x_1-x_2\| \\
    \leq & (R_x+2LR_Z(\delta))\|x_1-x_2\|\leq R_x^2 + 2LR_Z(\delta)R_x
\end{align*}
If $x_1 = \mu_\theta(z_1)$ and $x_2 = \mu_\theta(z_2)$, this bound becomes 
\begin{align*}
    |\log p_\theta(x_1|z)-\log p_\theta(x_2|z)| \leq 4L^2R^2_Z(\delta)
\end{align*}
\end{proof}
\subsection{Differential Privacy of the Encoder}
Sampling from the posterior of a Bayesian model with bounded log-likelihood is an example of the exponential mechanism and hence satisfies $\epsilon$-DP. By our consistency assumption, this holds true for the encoder of the L-VAE as well.

\begin{lemma}[Posterior lemma]
    Let $p_\theta(x|z)$ satisfy the assumptions, and $q_\phi(z|x)$ be the corresponding encoder, linked to $p_\theta(x|z)$ via consistency, i.e.
    \begin{equation}
    \label{eq:consistency_post}
    q_\phi(z|x)p_\theta(x)=p_\theta(z|x)p(z). 
    \end{equation}
    Then $q_\phi(z|x)$ is $\epsilon$-DP with $\epsilon=2C$, where $C$ is the constant from lemma \ref{lemma:bounded_log_likelihood}.
\end{lemma}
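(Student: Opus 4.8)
The plan is to exhibit the encoder as an instance of the exponential mechanism and bound its privacy loss directly through the pointwise likelihood bound of Lemma \ref{lemma:bounded_log_likelihood}. The relevant neighbouring "databases" here are two single records $x_1, x_2 \in \mc X$, and the mechanism is the map $x \mapsto z$ with $z$ drawn from $q_\phi(z|x)$. To establish $\epsilon$-DP in the sense of Definition \ref{def:eddp} it suffices to bound the density ratio $q_\phi(z|x_1)/q_\phi(z|x_2)$ uniformly in $z$ by $e^\epsilon$, since integrating such a pointwise bound over any measurable $S \subset \mc Z$ recovers the probability form of the definition.

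First I would invoke the consistency relation \eqref{eq:consistency_post} to rewrite the encoder as the one-observation Bayesian posterior over latents, $q_\phi(z|x) = p_\theta(x|z)p(z)/p_\theta(x)$. Forming the ratio for the two neighbouring records, the prior $p(z)$ cancels, leaving
\begin{equation*}
    \frac{q_\phi(z|x_1)}{q_\phi(z|x_2)} = \frac{p_\theta(x_1|z)}{p_\theta(x_2|z)} \cdot \frac{p_\theta(x_2)}{p_\theta(x_1)}.
\end{equation*}
This cleanly separates the privacy loss into a \emph{likelihood} term and a \emph{normaliser} (evidence) term, which are precisely the two contributions that together produce the factor $2C$.

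Next I would bound each factor by $e^C$. The likelihood ratio is controlled immediately by Lemma \ref{lemma:bounded_log_likelihood}: exponentiating the bound $\log p_\theta(x_1|z) - \log p_\theta(x_2|z) \leq C$ gives $p_\theta(x_1|z)/p_\theta(x_2|z) \leq e^C$ for every $z$. For the evidence ratio, the \emph{same} pointwise inequality $p_\theta(x_2|z) \leq e^C p_\theta(x_1|z)$ holds for all $z$, and integrating it against the prior preserves the ordering, so $p_\theta(x_2) = \int p_\theta(x_2|z)p(z)\,dz \leq e^C \int p_\theta(x_1|z)p(z)\,dz = e^C p_\theta(x_1)$. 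Multiplying the two bounds yields $q_\phi(z|x_1)/q_\phi(z|x_2) \leq e^{2C}$ uniformly in $z$, which is exactly $\epsilon$-DP with $\epsilon = 2C$.

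The computation is routine once consistency is in hand; the main conceptual obstacle is the interpretation rather than the algebra, namely recognising that consistency makes the encoder posterior coincide with a single-record Bayesian posterior, so that the "privacy for free" posterior-sampling argument of \cite{wang2015privacy} applies with the per-record database being one data point. A secondary point to treat with care is that the integration step for the evidence term requires the \emph{uniform-over-$z$} form of Lemma \ref{lemma:bounded_log_likelihood}; this is available because that lemma is stated for all $z \in \mc Z$, so passing the pointwise inequality under the integral sign is legitimate.
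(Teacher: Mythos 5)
Your proof is correct and follows essentially the same route as the paper: both decompose the encoder density ratio via consistency into a likelihood ratio and an evidence ratio, bound each by $e^C$ using Lemma \ref{lemma:bounded_log_likelihood} (the evidence ratio by pushing the pointwise bound under the integral against the prior), and multiply to obtain $e^{2C}$. The only cosmetic difference is that you establish the uniform pointwise density-ratio bound first and then integrate over the output set, whereas the paper inserts the ratio inside the integral from the start; the substance is identical.
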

\begin{proof}
    Let $B\subset \mc Z$ and $x_1,x_2\in \mc X$. Then
    \begin{align*}
        \int_Bq_\phi(z|x_1)dz & = \int_Bq_\phi(z|x_1)\frac{q_\phi(z|x_2)}{q_\phi(z|x_2)}dz \\
        & = \int_B\frac{q_\phi(z|x_1)}{q_\phi(z|x_2)} q_\phi(z|x_2)dz
    \end{align*}
    Using \eqref{eq:consistency_post} and $p_\theta(x)=\int_Z p_\theta(x|z)p(z)dz$, we have
    \begin{align*}
        &\frac{q_\phi(z|x_1)}{q_\phi(z|x_2)} = \frac{p_\theta(x_1|z)p(z)}{p_\theta(x_1)}\cdot\frac{p_\theta(x_2)}{p_\theta(x_2|z)p(z)} \\
        =& \frac{p_\theta(x_2)}{p_\theta(x_1)}\cdot\frac{p_\theta(x_1|z)}{p_\theta(x_2|z)} \\
        \leq &\frac{p_\theta(x_2)}{p_\theta(x_1)} \exp\left(\left| \log p_\theta(x_1|z)-\log p_\theta(x_2|z) \right|\right)  \\
        \leq &\frac{p_\theta(x_2)}{p_\theta(x_1)} \exp(C).
    \end{align*}
    The first factor can be bounded analogously as
    \begin{align*}
        &\frac{p_\theta(x_2)}{p_\theta(x_1)} = \frac{\int_Zp_\theta(x_2|z)p(z)dz}{\int_Zp_\theta(x_1|z)p(z)dz} \\
        =&\frac{\int_Zp_\theta(x_2|z)\frac{p_\theta(x_1|z)}{p_\theta(x_1|z)}p(z)dz}{\int_Zp_\theta(x_1|z)p(z)dz} = \frac{\int_Z\frac{p_\theta(x_2|z)}{p_\theta(x_1|z)}p_\theta(x_1|z)p(z)dz}{\int_Zp_\theta(x_1|z)p(z)dz} \\
        \leq & \frac{\int_Z \exp\left(\left| \log p_\theta(x_2|z)-\log p_\theta(x_1|z) \right| \right)p_\theta(x_1|z)p(z)dz}{\int_Zp_\theta(x_1|z)p(z)dz} \\
        \leq &\exp(C).
    \end{align*}
    Hence, we have
    \begin{align*}
        \int_Bq_\phi(z|x_1)dz & \leq \exp(2C) \int_B q_\phi(z|x_2)dz.
    \end{align*}
\end{proof}
\subsection{Differentially Private Synthetic Data Generation}
The previous lemma established for points $x_1, x_2$ and two sets of weights $\phi_1,\phi_2$ that
$q_{\phi_1}(z|x_1) \leq e^{\epsilon} q_{\phi_1}(z|x_2)$ and
$q_{\phi_2}(z|x_1)\leq e^{\epsilon} q_{\phi_2}(z|x_2)$. We need to take the variation of the model parameters into account as well. 
\begin{lemma}
\label{lemma:enc_weights}
Let $D_1, D_2$ be neighboring datasets and denote the $\epsilon$-DP encoders of a L-VAE trained on these datasets by $q_{\phi_1}(z|x)$ and $q_{\phi_2}(z|x)$. Then, for $B\subset B_Z(\delta)$ and $N$ points $x_i\sim p(x)$, $x_i\in A\subset \mc X$, it holds
\begin{equation}
\int_B\frac1N \sum_{i} \left[q_{\phi_1}(z|x_i)-q_{\phi_2}(z|x_i)\right] dz \leq 2\sqrt{\epsilon}|B|.
\end{equation}
\end{lemma}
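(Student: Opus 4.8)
The plan is to read the left-hand side as the integral over $B$ of the gap between the two average encoding distributions, $\int_B\left[q_{\phi_1}^{avg}(z)-q_{\phi_2}^{avg}(z)\right]dz$, and to compare both of them to a common reference that does not depend on the weights at all, namely the prior $p(z)$. After inserting $\pm p(z)$ and using the triangle inequality, it suffices to control $\int_B\left|q_\phi^{avg}(z)-p(z)\right|dz$ separately for $\phi\in\{\phi_1,\phi_2\}$. This detour through $p$ is the crucial idea: the Posterior lemma compares one encoder on two different \emph{inputs}, but here the two encoders differ in their \emph{weights}, so the earlier $\epsilon$-DP bound cannot be applied to them directly; the prior is the only object both averaged encoders are simultaneously forced to be near.

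The first substantive step is the divergence bound $\KL\!\left(q_\phi^{avg}\,\|\,p\right)\le\epsilon$ for any admissible encoder. Integrating the consistency identity $q_\phi(z|x)p(x)=p_\theta(x|z)p(z)$ over $x$ yields $\E_{x\sim p(x)}\!\left[q_\phi(z|x)\right]=p(z)$; combined with the Posterior lemma's pointwise bound $q_\phi(z|x_i)\le e^{\epsilon}q_\phi(z|x')$, taking the expectation over $x'\sim p(x)$ gives the domination $q_\phi(z|x_i)\le e^{\epsilon}p(z)$, hence $\KL\!\left(q_\phi(z|x_i)\,\|\,p\right)\le\epsilon$ for every $i$. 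Inserting this into the decomposition of Lemma \ref{lemma:idx_code} and using $\H(\mathbf N|\mathbf Z)\le\log N$ collapses the remaining terms, leaving $\KL\!\left(q_\phi^{avg}\,\|\,p\right)\le\frac1N\sum_i\KL\!\left(q_\phi(z|x_i)\,\|\,p\right)\le\epsilon$.

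The square-root dependence then comes from Pinsker's inequality, $\TV\!\left(q_\phi^{avg},p\right)\le\sqrt{\epsilon/2}$, and the factor $|B|$ is produced in the final packaging: the same estimate gives the pointwise two-sided domination $e^{-\epsilon}p(z)\le q_\phi^{avg}(z)\le e^{\epsilon}p(z)$, so that $\left|q_{\phi_1}^{avg}(z)-q_{\phi_2}^{avg}(z)\right|$ is controlled pointwise and integrated over $B$, while the triangle inequality over the two encoders collects the constant. I expect the genuine obstacle to be conceptual rather than computational, namely justifying that the variation in the trained weights across the neighboring datasets $D_1,D_2$ is entirely absorbed by the prior: one must argue that \emph{every} admissible pair $(\phi,\theta)$ obeys the same uniform bounds — the log-likelihood bound of Lemma \ref{lemma:bounded_log_likelihood} holds for all $\theta$, and consistency pins each $q_\phi^{avg}$ to $p$ — so that both averaged encoders lie in the same $\epsilon$-neighbourhood of $p$ regardless of which dataset produced them. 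Chasing the constants through the Pinsker and triangle-inequality steps so as to land exactly on $2\sqrt{\epsilon}\,|B|$ is the only remaining piece of bookkeeping.
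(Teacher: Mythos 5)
Your overall strategy matches the paper's in outline --- detour through the prior $p(z)$, triangle inequality, and a Pinsker-type step converting a KL bound into a total-variation bound --- but your route to the key estimate $\KL\left(q_\phi(z|x_i)\,\|\,p(z)\right)\le\epsilon$ is genuinely different and arguably cleaner. You integrate the consistency identity to get $\E_{x\sim p(x)}\left[q_\phi(z|x)\right]=p(z)$ and then average the pointwise domination $q_\phi(z|x_i)\le e^{\epsilon}q_\phi(z|x')$ over $x'$, obtaining $q_\phi(z|x_i)\le e^{\epsilon}p(z)$ and hence the KL bound directly. The paper instead sums the per-point KL terms, identifies that sum with the index-code mutual information via Lemma \ref{lemma:idx_code}, and invokes the equivalence of $\epsilon$-DP and mutual-information DP from \cite{cuff2016differential} to bound it by $\epsilon$. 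Your argument is self-contained given the Posterior lemma and consistency, and avoids that external equivalence entirely.

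The genuine gap is the factor $|B|$. Your Pinsker route gives $\int_B\left|q_\phi^{avg}(z)-p(z)\right|dz\le 2\,\TV\left(q_\phi^{avg},p\right)\le\sqrt{2\epsilon}$, a bound with no $|B|$ in it; since $B$ may have small measure, this does not imply $2\sqrt{\epsilon}\,|B|$, only the $|B|$-free bound $2\sqrt{2\epsilon}$ after the triangle inequality. Your fallback, the two-sided domination $e^{-\epsilon}p(z)\le q_\phi^{avg}(z)\le e^{\epsilon}p(z)$, does produce a factor of $|B|$ upon integration (together with $\sup_{z\in B}p(z)$), but with $\epsilon$-dependence $e^{\epsilon}-e^{-\epsilon}=2\sinh\epsilon$ rather than $2\sqrt{\epsilon}$. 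These two mechanisms cannot be spliced together to land on $2\sqrt{\epsilon}\,|B|$, so the ``bookkeeping'' you defer is not actually available. For comparison, the paper manufactures the $|B|$ by normalising the left-hand side by $1/|B|$ at the outset and then bounding that normalised average by a sum of total variations --- a step that silently discards the $1/|B|$ prefactor and is itself questionable --- so the $|B|$ factor is the weak point of both arguments; but as written your proof establishes only the bound $2\sqrt{2\epsilon}$ without the $|B|$.
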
 

\begin{proof}[proof of lemma \ref{lemma:enc_weights}]
    \begin{align*}
        \frac{1}{|B|}&\int_B \frac1N \sum\limits_{i=1}^N\left[q_{\phi_1}(z|x_i)-q_{\phi_2}(z|x_i) \right]dz \\
        &\leq \sum\limits_{i=1}^N \sup\limits_{C\subseteq B} \left| q_{\phi_1}(C|x_i)-q_{\phi_2}(C|x_i) \right|  \\
        &=\sum\limits_{i=1}^N \sup\limits_{C\subseteq B} \left| q_{\phi_1}(C|x_i) - p(C) + p(C)-q_{\phi_2}(C|x_i) \right| \\
        &\leq \sum\limits_{i=1}^N \sup\limits_{C\subseteq B} \left| q_{\phi_1}(C|x_i)-p(C) \right| 
        + \sum\limits_{i=1}^N \sup\limits_{C\subseteq B} \left| q_{\phi_2}(C|x_i)-p(C) \right|   \\
        &\leq \sum\limits_{i=1}^N\sqrt{\KL(q_{\phi_1}(z|x_i)||p(z))} +\sum\limits_{i=1}^N\sqrt{\KL(q_{\phi_2}(z|x_i)||p(z))} \\
        &\leq \sqrt{\sum\limits_{i=1}^N\KL(q_{\phi_1}(z|x_i)||p(z))} 
        +\sqrt{\sum\limits_{i=1}^N\KL(q_{\phi_2}(z|x_i)||p(z))} \\
        &=\sqrt{\I_{q_{\phi_1}(i,z)}(N,Z)} + \sqrt{\I_{q_{\phi_2}(i,z)}(N,Z)}. 
    \end{align*}
In the last line, Lemma \ref{lemma:idx_code} was used.
Since $q_{\phi_1}(z|x)$ and $q_{\phi_2}(z|x)$ are each $\epsilon$-DP mechanisms the index code mutual information terms can be upper bounded by $\epsilon$, due to the equivalence of $\epsilon$-DP and MI-DP (\cite{cuff2016differential}, thm. 1 and corr. 1). Therefore,
\begin{align}
&\frac{1}{|B|}\int_B \frac1N \sum\limits_{i=1}^N\left[q_{\phi_1}(z|x_i)-q_{\phi_2}(z|x_i) \right]dz \\
&\leq \sqrt{\I_{q_{\phi_1}(i,z)}(n,z)} + \sqrt{\I_{q_{\phi_2}(i,z)}(n,z)} \\
& \leq \sqrt{\epsilon} + \sqrt{\epsilon}\leq 2\sqrt{\epsilon}.    
\end{align}
\end{proof}

Finally, the main theorem about the privacy of generated data can be proved.

\begin{proof}
Let $A\subset B_X$.
    \begin{align*}
        &\int_A p_\theta(x)dx = \int_A\int_{\mc Z} p_\theta(x|z)p(z) dz dx \\
        =&\int_A\int_{B_Z} q_\phi(z|x)p(x) dz dx + \int_A\int_{B_Z^C} p_\theta(x|z)p(z)dz dx
    \end{align*}
    because of the assumed consistency \eqref{eq:consistency} inside $B_Z\times B_X$. Now,
    by definition of $B_Z$
    \begin{align*}
      &\int_A\int_{B_Z} q_\phi(z|x)p(x) dz dx + \int_A\int_{B_Z^C} p_\theta(x|z)p(z)dz dx \\
       = &\int_{B_Z} \int_A q_\phi(z|x)p(x) dx dz + \int_{B_Z^C} \int_A p_\theta(x|z)dx p(z)dz \\
      \leq & \int_{B_Z} \int_A q_\phi(z|x)p(x) dz dx + \delta \\
      = &\int_{B_Z} |A| \frac1N \sum\limits_{i=1}^N q_\phi(z|x_i) dz + \delta + \delta_x 
    \end{align*}
   Here, the integral over $A$ is approximated via Monte-Carlo integration. The points $\{x_1,\ldots, x_N\}\subset A$ are drawn independently from $p(x)$. The error term $\delta_x$ decreases as $1/\sqrt{N}$. Lemma \ref{lemma:enc_weights} is applied.
   \begin{align*}
     &\int_{B_Z} |A| \frac1N \sum\limits_{i=1}^N q_\phi(z|x_i) dz + \delta + \delta_x \\
     =&\int_{B_Z} |A| \frac1N \sum\limits_{i=1}^N q_{\Tilde{\phi}}(z|x_i) dz \\
     &+ \int_{B_Z} |A| \frac1N \sum\limits_{i=1}^N \left[q_\phi(z|x_i)-q_{\Tilde{\phi}}(z|x_i) 
     \right]dz + \delta + \delta_x \\
     \leq & \int_{B_Z} |A| \frac1N \sum\limits_{i=1}^N q_{\Tilde{\phi}}(z|x_i) dz + \delta + \delta_x + 2\sqrt{\epsilon}|B|\frac{1}{|A|}. 
   \end{align*}
   What remains, is to go through the steps in reverse order to achieve a bound in terms of $p_{\Tilde{\theta}}(x|z)$ to complete the proof. Therefore, with an error term $\Tilde{\delta}_x$ that goes to zeros as $1/\sqrt{N}$, 
   \begin{align*}
       &\int_{B_Z} |A| \frac1N \sum\limits_{i=1}^N q_{\Tilde{\phi}}(z|x_i) dz + \delta + \delta_x + 2\sqrt{\epsilon}|B|\frac{1}{|A|} \\
       \leq &\int_{B_Z} \int_A q_{\tilde{\phi}}(z|x)p(x) dx dz + \delta + \delta_x + \Tilde{\delta}_x + 2\sqrt{\epsilon}|B|\frac{1}{|A|}.
   \end{align*}
   Writing $\Bar{\delta} = \delta + \delta_x + \Tilde{\delta}_x + 2\sqrt{\epsilon}\frac{|B|}{|A|}$ for convenience, and using again the consistency assumption,
   \begin{align*}
       &\int_{B_Z} \int_A q_{\tilde{\phi}}(z|x)p(x) dx dz + \Bar{\delta} \\
       \leq & \int_{B_Z} \int_A p_{\tilde{\theta}}(x|z)p(z) dx dz \\
       &+ 
       \int_{B_Z^C} \int_A p_{\tilde{\theta}}(x|z)p(z) dx dz + \Bar{\delta} \\
       = & \int_{\mc Z} \int_A p_{\tilde{\theta}}(x|z)p(z) dx dz + \Bar{\delta}
       =  \int_A p_{\tilde{\theta}}(x) dx  + \Bar{\delta}
   \end{align*}
\end{proof}

\section{Experiments}
\label{sec:experiments}
The proposed method is evaluated with respect to the quality of the generated data and its capacity to preserve the privacy of the training data on the popular MNIST dataset \cite{deng2012mnist}. The L-VAE is compared to a VAE with the same encoder and decoder architectures, trained 1.) without any privacy protection measures, and 2.) using a differentially private optimiser as in \cite{abadi2016deep}.  While the findings in  \cite{stadler2021synthetic} suggest that merely generating synthetic data is not enough to properly protect privacy, their results nevertheless show that it is hard to infer membership for typical data, and it is particularly outliers that are exposed to privacy attacks. Hence, we insert a white image (all pixels have value 1) into the training set and observe its reconstruction with particular attention.

\subsection{Implementation Details}
All experiments were conducted in python, using TensorFlow 2.6 \footnote{\url{https://www.tensorflow.org/}}
for the implementation of the VAE variants and scikit-learn\footnote{\url{https://scikit-learn.org/}} for the implementation of the Monte-Carlo MIA.
Two variants of the L-VAE are examined: The Lipschitz property of the decoder can be obtained by adding a gradient penalty to the ELBO objective, in the same way as is done in improved training for Wasserstein GANs \cite{gulrajani2017improved}. Since the gradient of the decoder with respect to its input variable $z$ is only probed at a finite number of points, this method cannot strictly guarantee that the Lipschitz constraint holds everywhere. Moreover, depending on the weighting of the penalty term, it may happen that the constraint is not satisfied exactly, which then leads to a larger Lipschitz constant. A more rigorous way is to use spectral normalization \cite{miyato2018spectral} on each of the decoder's layers. Spectral normalization estimates the spectral norm of a layer (pre activation) by a few steps of a power iteration algorithm and divides the weight matrix by the estimated spectral norm. Hence, the layer becomes 1-Lipschitz. Most activation functions such as ReLU or sigmoid activations are also 1-Lipschitz. To guarantee that the whole decoder is $L$-Lipschitz for a chosen constant $L>0$, it is thus sufficient to apply spectral normalisation to each layer and multiply its output by $L^{\frac1n}$, where $n$ is the number of layers. With spectral normalisation applied to the decoder the Lipschitz constraint is ensured to always hold. In the experiments on the MNIST dataset the gradient penalty version worked better than the alternative and hence in Figure \ref{fig:mnist_vae_gp} these results are reported.

\underline{\textbf{MNIST}} 
For the experiments on MNIST, the encoder and decoder of all VAE variants consist of convolution resp. transpose convolution layers and dense layers with ReLU activations, except for the output layers that do not apply any activation function. The encoder consists of 4 convolution layers with 32, 64, 128 and 32 filters of size $3\times 3$ and strides 2, followed by two dense layers with output sizes 100 and $2\cdot n_{latent}$, where $n_{latent}$ denotes the dimension of the latent space. The decoder takes samples from the latent space as input, which go through two dense layers of dimension 100 and $7\cdot 7\cdot 32$ resp., followed by transpose convolution layers with 64, 128 and 1 filters of size $3\times 3$ with strides 2. The MNIST samples of dimensions $28\times28\times 1$ are binarised to have values in $\{0,1\}$, and hence, binary crossentropy is used as loss function for the reconstruction. 9999 samples from the MNIST training set are taken uniformly at random plus the all-white outlier image, yielding 10k training samples. For the privacy evaluation 1000 non-members are sampled from the MNIST test set and 1000 members from the training set. For each of the resulting 2000 images the Monte-Carlo attack is conducted. The resulting trade-off between type I and type II errors for varying thresholds are reported below.\\


\subsection{Results}

\begin{figure}[ptb]
    \centering
    \includegraphics[width=.485\textwidth]{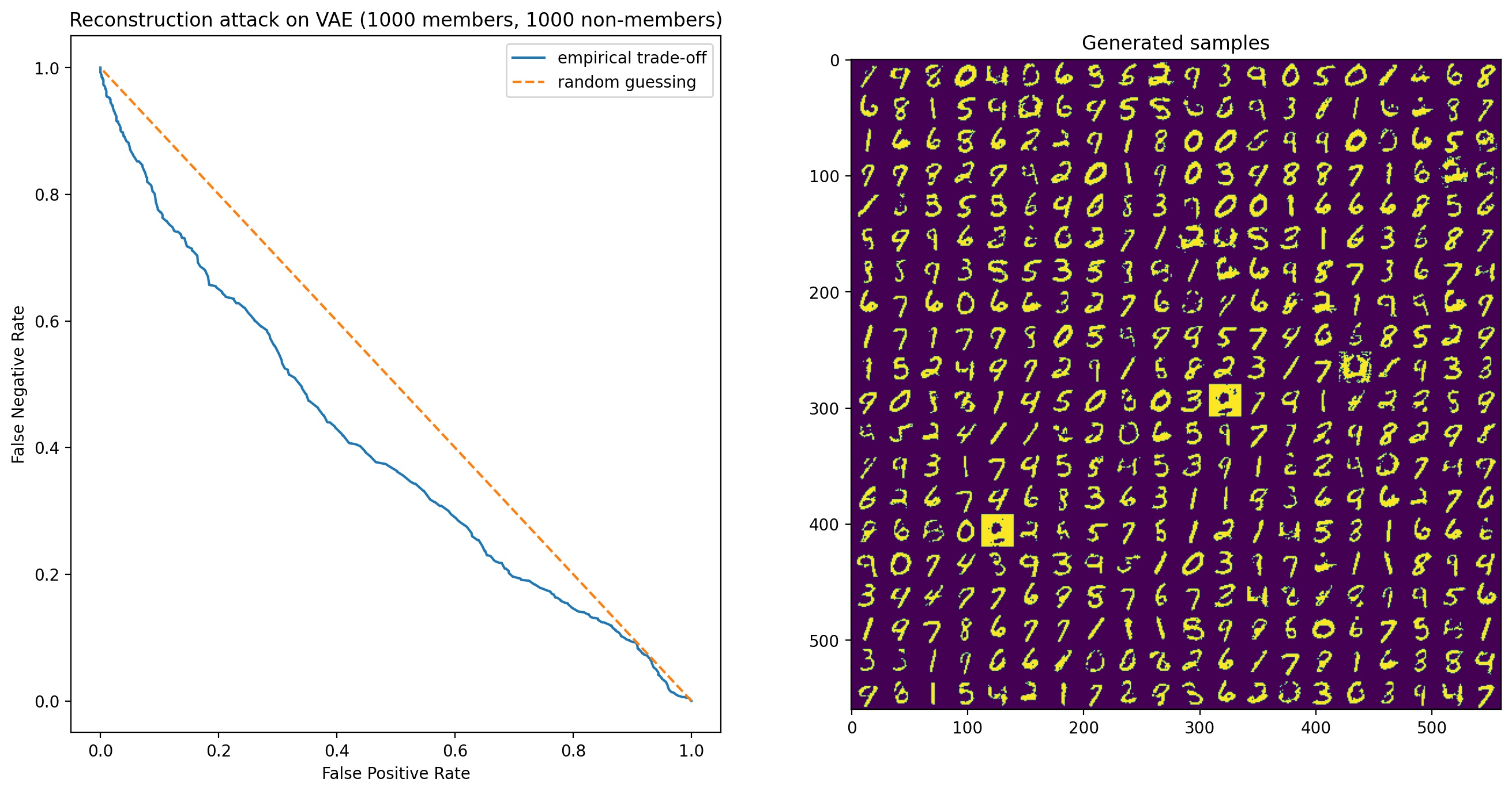}
    \caption{VAE - attack trade-off curve and generated samples}
    \label{fig:mnist_vae}
\end{figure}

\begin{figure}
    \centering
    \includegraphics[width=.485\textwidth]{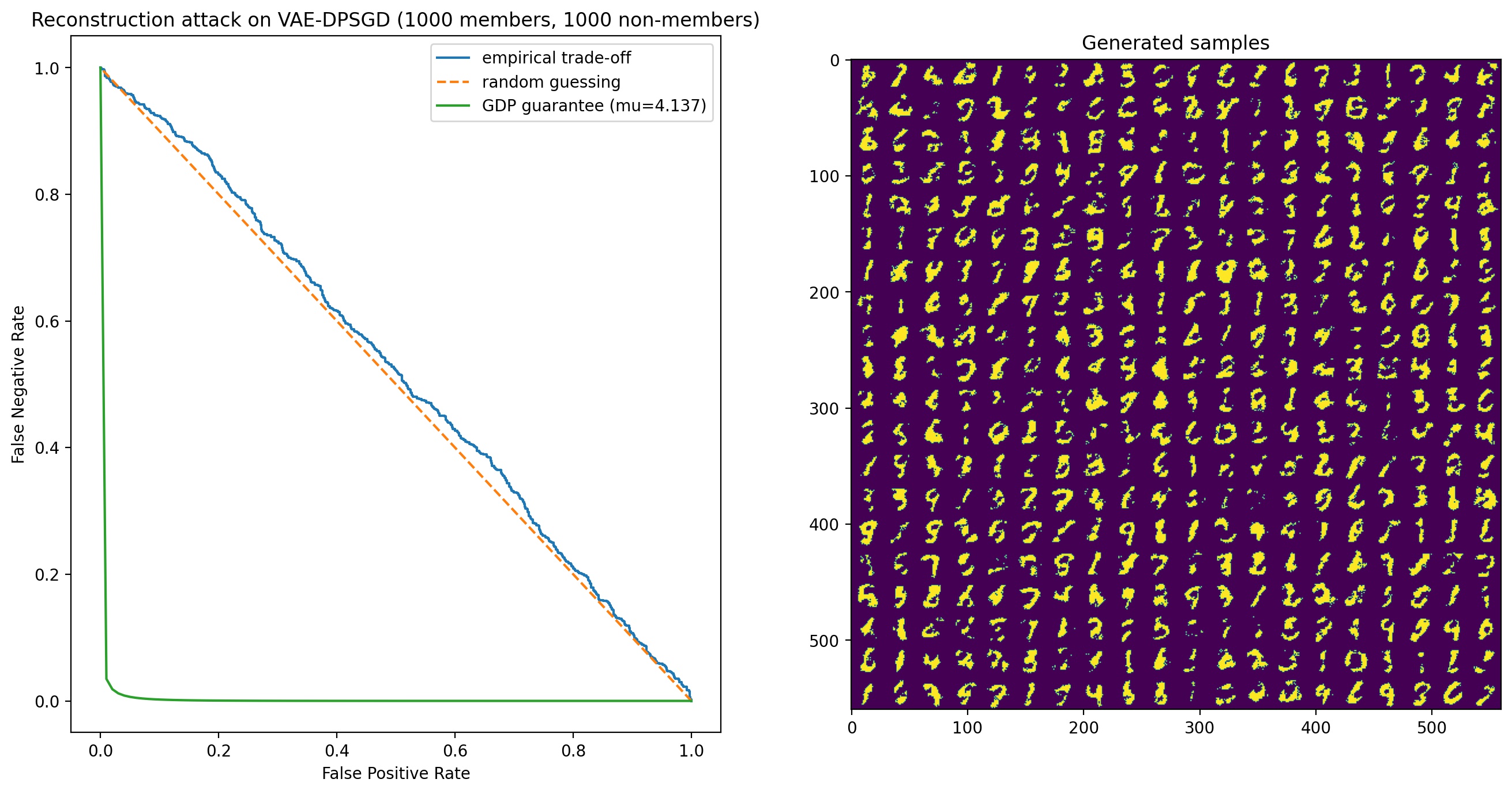}
    \caption{VAE-DPSGD - attack trade-off curve and generated samples}
    \label{fig:mnist_vae_dpsgd}
\end{figure}

\begin{figure}
    \centering
    \includegraphics[width=.485\textwidth]{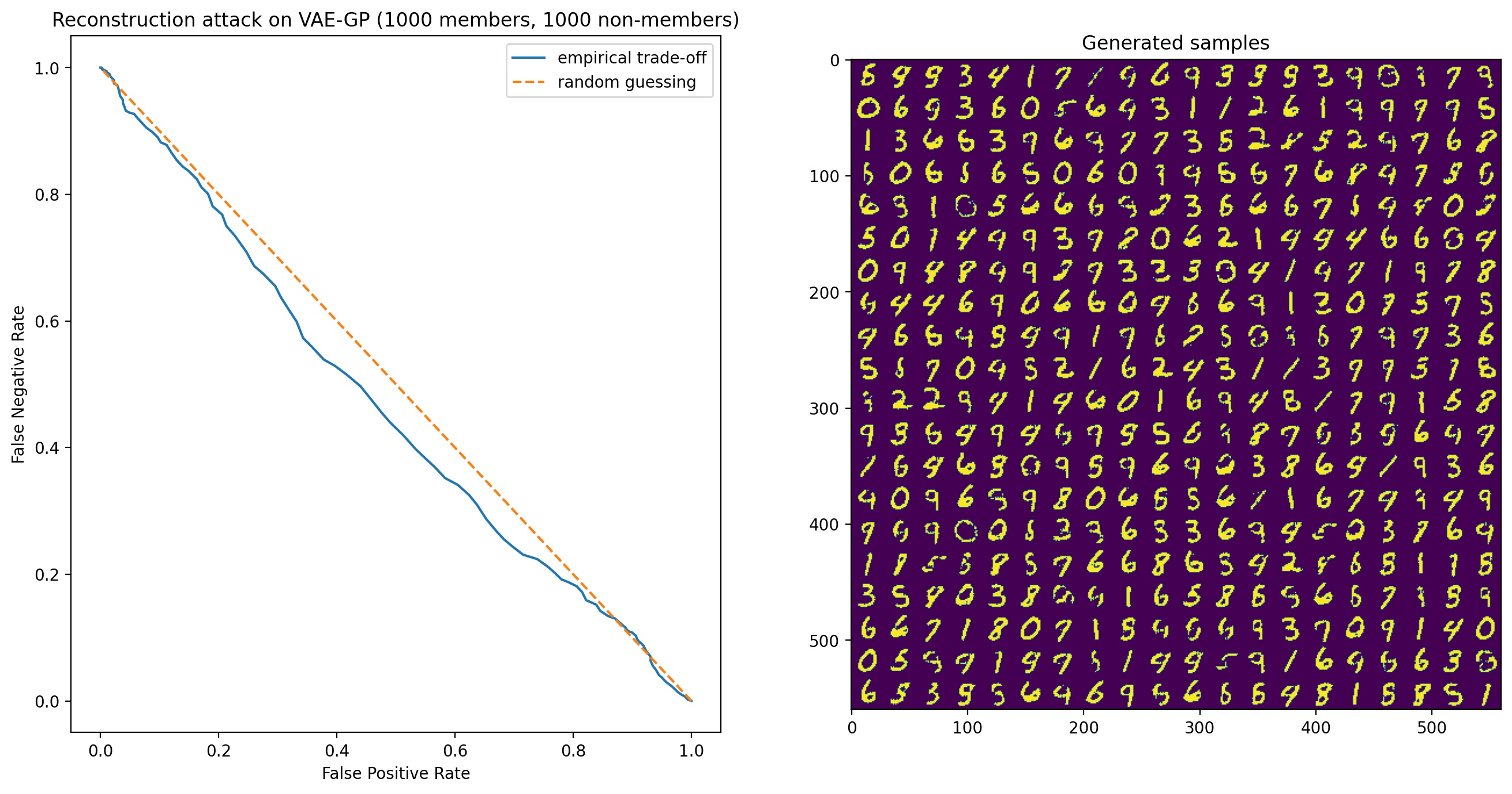}
    \caption{LVAE - attack trade-off curve and generated samples}
    \label{fig:mnist_vae_gp}
\end{figure}

In Figures \ref{fig:mnist_vae}, \ref{fig:mnist_vae_dpsgd}, \ref{fig:mnist_vae_gp} the resulting trade-off curves of the conducted MIA and synthetic samples are shown for the three VAE variants considered. While DP-SGD is the most effective for preventing membership inference, despite its loose theoretical guarantee (depicted in green), the sample quality is much worse than the proposed approach. The outlier (yellow square) in the training data is clearly reconstructed in the vanilla VAE, while both privacy preserving variants do not reconstruct this particular sample, and hence prevent it from occuring in the generated data.

\section{Conclusion and Upcoming Work}
\label{sec:conclusion}

Constraining the Lipschitz constant of the decoder network in a variational autoencoder enhances the privacy protecting capabilities of synthetic data generation, while allowing for higher quality samples than conventional differential privacy techniques. A privacy guarantee that is independent of the number of training epochs is crucial for training useful differentially private deep generative models. Training generative models with DP-SGD
often results in loose privacy bounds and/or bad sample quality, especially in the absence of public data on which the models can be pre-trained. PATE-GAN \cite{jordon2018pate} reports much better results, but may not be applicable for small datasets. Both these methods rely on the Gaussian mechanism and thus might inject more noise into the training than necessary. The proposed method in contrast constrains the mapping that is learned by the decoder and achieves privacy by exploiting the Bayesian nature of the VAE.
While the empirical results are highly positive, the theory needs to be further expanded, and especially the connection between the chosen Lipschitz constant and the privacy guarantee needs to be quantified exactly. An expansion of the proposed technique towards more expressive architectures and in particular generative models for mixed cataegorical/continuous data types is a topic of ongoing work. Initial experiments on the UCI Adult dataset (not reported here) suggest also improved performance compared to a VAE trained with DP-SGD. The challenges of mixed data call for architectures beyond standard VAEs, such as VAEM \cite{ma2020vaem} for which the proposed method seems particularly suited.

\section*{Acknowledgement}
This work was partially funded by the German Federal Ministry of Education and Research (BMBF) under project FT-Chain (01DS21011) and TinyPART (16KIS1395K). The authors acknowledge the financial support by the Federal Ministry of Education and Research of Germany in the programme of “Souverän. Digital. Vernetzt.” Joint project 6G-RIC, project identification number 16KISK020K.

\ifCLASSOPTIONcaptionsoff
  \newpage
\fi



%

\bibliography{refs}
\bibliographystyle{unsrt}
\end{document}